\definecolor{orange}{RGB}{255,153,0}
\definecolor{green}{RGB}{9,112,84}
\definecolor{yellow}{RGB}{52,100,100}
\definecolor{blue}{RGB}{101,153,255}
\newtheorem{thm}{Theorem}
\newtheorem{lemma}{Lemma}
\newtheorem{corollary}{Corollary}
\title{Causal Compression}
\author{Aleksander Wieczorek}
\author{Volker Roth}
\affil{Department of Mathematics and Computer Science, University of Basel, Switzerland\\
	\texttt{\{aleksander.wieczorek, volker.roth\}@unibas.ch}}
\begin{document}

\maketitle

\begin{abstract}
	We propose a new method of discovering causal relationships in temporal data based on the notion of causal compression. To this end, we adopt the Pearlian graph setting and the directed information as an information theoretic tool for quantifying causality. We introduce chain rule for directed information and use it to motivate causal sparsity. We show two applications of the proposed method: causal time series segmentation which selects time points capturing the incoming and outgoing causal flow between time points belonging to different signals, and causal bipartite graph recovery. We prove that modelling of causality in the adopted set-up only requires estimating the copula density of the data distribution and thus does not depend on its marginals. We evaluate the method on time resolved gene expression data.

\end{abstract}

\section{Introduction}
\label{sec:introduction}

Causality modelling has recently received much attention in the machine learning community. Various approaches to discovering causal relationships in the data have been proposed. The idea of an intervention, i.e.\ forcing a node in a graphical model to a particular value and analysing the resulting distribution has been introduced in~\citep{book:pearl:causality} and further developed in~\citep{art:eberhardt:interventions_causality, art:hagmayer:interventions_causality} as well as adjusted to account for observational data~\citep{art:buhlnamm:interventional_observational}. The use of structural equation models has been advocated in~\citep{report:pearl:sem_causality, thesis:peters:sem_causality}. Expressing Pearl's intervention calculus in terms of information theoretic concepts capturing the difference between interventional and observational distributions resulted in rich literature, e.g.\ \citep{art:raginsky:directed_inf_causality, art:amblard:granger, art:massey:directedInformation}. As a result, asymmetrical information theoretic measures \citep{art:massey:directedInformation} are used for modelling causal relationships in graphical models.

We build on these ideas by employing directed information between time series to quantify the amount of directed information flow. In this setting, we introduce the notion of \textbf{causal compression}, i.e.\ compression which, by maximising directed information, selects time points carrying the causal flow between the time series. We show that sparsity of compression ensures causal compression, i.e.\ only the nodes or edges which reflect directed causal connections are selected. We then construct a constrained optimisation problem for finding a causal sparse representation of a time series. We also show that the modelled directed relationships only depend on the copula.

\paragraph*{Motivation}

We motivate our compression-based approach by a general principle of solving problems formulated by Vapnik in the context of learning theory in~\citep{book:vapnik}: ,,
When solving a problem, try to avoid solving more general problem as an intermediate step.`` We interpret it in the following manner: it is not necessary to infer a general structure $G$ from data $D$, if one is only interested in a function $f(D)$ preserving only certain semantics of $G$ (see Figure~\ref{pic:problem_setting}).
In our setting, the general structure $G$, the estimation of which we try to avoid, is the full causal network. We show that the partial semantics defined by $f(G)$ can be obtained by employing causal compression without inferring $G$. We give two examples of $f(D)$, presented schematically in Figure~\ref{pic:motivation}. The first $f(D)$ is the \textbf{causal segmentation} of time points of one time series into those that exhibit outgoing or incoming causal flow (\textcolor{orange}{orange} and \textcolor{green}{green} nodes in Figure~\ref{pic:motivation}, respectively) to the other time series and those involved in instantaneous information exchange (\textcolor{blue}{blue} nodes in Figure~\ref{pic:motivation}). Another example of $f(D)$ that we present is \textbf{causal bipartite graph estimation}, e.g.\ computing a mixed bipartite graph between the two time series, where the arrows mean causal dependence and edges mean instantaneous information exchange.






\begin{figure*}
\centering     
\subfigure[Problem setting]{ \label{pic:problem_setting}\includegraphics[width=0.4\textwidth]{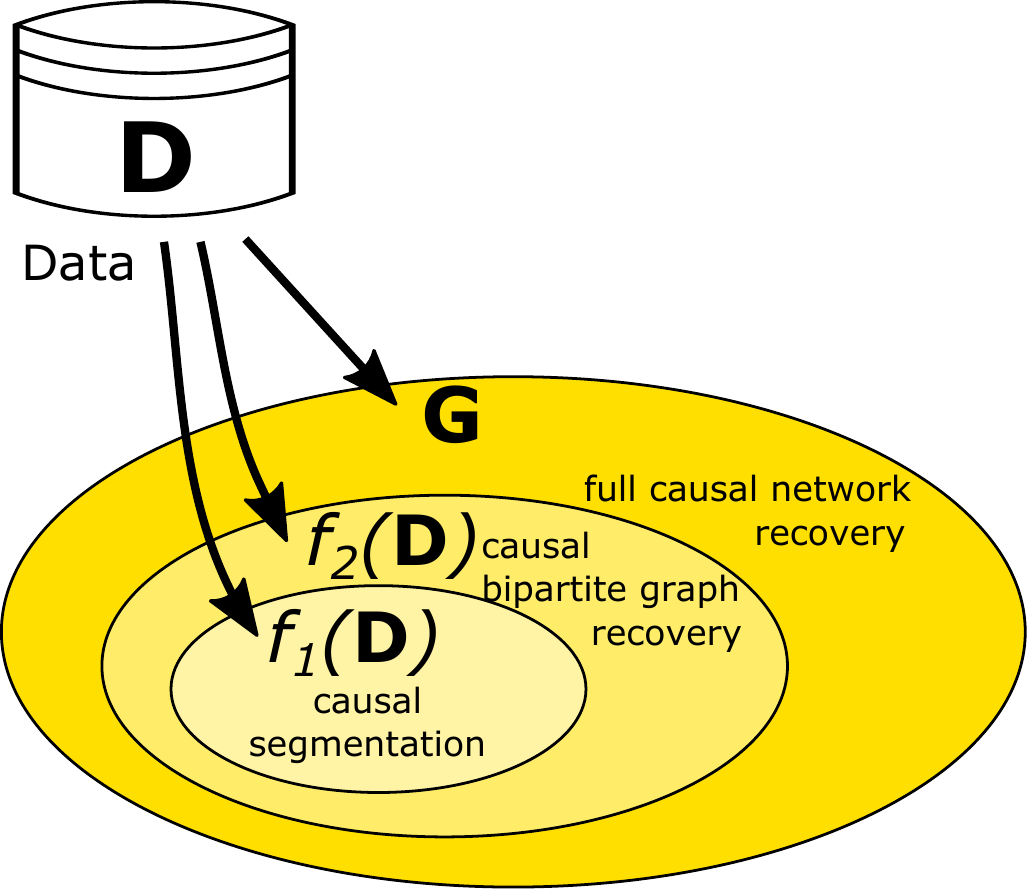}}
\quad \quad \quad \quad
\subfigure[Direct computation of $f_1(D), f_2(D)$]{\label{pic:motivation}\includegraphics[width=0.4\textwidth]{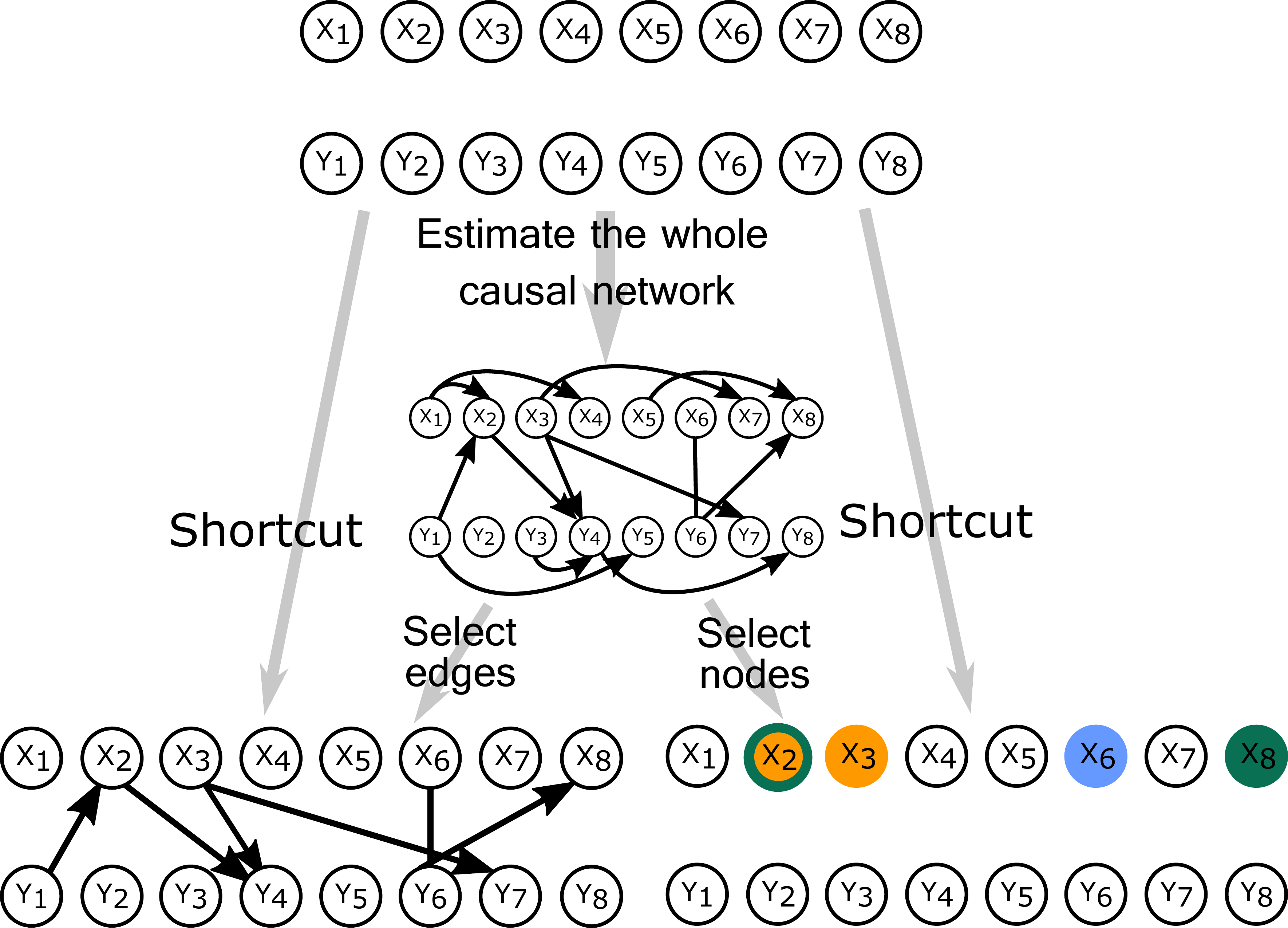}}
\caption{Causal compression --- motivation}
\end{figure*}

We show how to compute the two $f(D)$ for two parallel time series, which might describe two evolving systems, but all the concepts can be extended to more general structures according to~\citep{art:raginsky:directed_inf_causality} as long as a global ordering of the random variables is known.




This paper is structured as follows: Section~\ref{sec:relatedwork} formalises the setting of Pearlian graphs as well as introduces the formalism of directed information theory, both for directed acyclic graphs and for time series. In Section~\ref{sec:model}, the proposed causal compression principle is described, its applications to causal segmentation of time series and to directed bipartite graph recovery are characterised and the copula formulation of causal discovery is presented. Section~\ref{sec:experiments} includes experiments on synthetic as well as gene expression data. Concluding remarks are presented in Section~\ref{sec:conclusion}.


\section{Related work}
\label{sec:relatedwork}

\paragraph*{Causal graphs.}

Causal relationships in graphical models are frequently represented with directed acyclic graphs (DAGs). The arrows in the graphs can be imbued with causal interpretation in different ways, e.g.\ through considering differences in factorisations of the joint probability~\citep{art:causality:factorisations} or by introducing the \textit{causal Markov condition}~\citep{art:causality:causalMarkovCondition}. We follow the approach to representing causality with DAGs proposed in~\citep{book:pearl:causality}. It requires that one be able to perform, or think of performing, an \textit{intervention} on any node or collection of nodes in the graph. An intervention means that the variable intervened upon has its value set externally, while the influence of any other variables in the DAG (most importantly its parents) upon it is suppressed. This process corresponds to measuring the influence of a chosen set of variables on the rest of the system. A \textit{Pearlian DAG} satisfies two more conditions:
\begin{enumerate}
	\item It represents the conditional independence relations of the underlying probability distribution via \textit{d}-separation, i.e.\ any pair of sets of variables \textit{d}-separated by a third set $U$ are conditionally independent given $U$.
	\item For any node $V$, its conditional distribution given its parents does not depend on interventions on any other nodes in the DAG.
\end{enumerate}
The latter condition is called \textit{modularity} and can be shown to imply \textit{locality}, i.e.\ that only interventions performed on the parents of $V$ affect $V$. This can be thought of as an intuitive extension of conditional independence representation in graphical models to causality: the \textit{absence of an arrow} between two nodes implies the \textit{absence of a direct causal relationship} between them. Thus, Pearlian DAGs, while representing conditional independence relationships in the data, also provide a setting for analysing direct causal relationships.

\paragraph*{Causal graphs and directed information.}




Let $(V,E)$ be a Pearlian graph where the elements of $V$ are random variables taking values in $\mathbb{R}^{n}$ and assume $X,Y \subseteq V$.

Define the \textit{Kullback-Leibler divergence} between two (discrete or continuous) probability distributions $P$ and $Q$ as $D_{KL} (P || Q) = \mathbb{E}_{P} \log \frac{P(X)}{Q(X)}$
and the \textit{conditional Kullback-Leibler divergence} as $D_{KL} (P_{Y|X} || Q_{Y|X} |P_{X}) = \mathbb{E}_{P_{X,Y}} \log \frac{P(Y|X)}{Q(Y|X)}$
The \textit{mutual information} between $X$ and $Y$ is then defined as $I(X;Y) = D_{KL}(P(X,Y)||P(X)P(Y))$.

For any disjoint $X$ and $Y$ denote with $P_{X|\mbox{do}(Y=y)}$ the \textit{interventional distribution} of $X$, i.e.\ the distribution of $X$ which results from intervening on $Y$ by setting its value to $y$ as described above. This distribution is contrasted with the \textit{observational distribution} of $P_{X|Y}$ which is obtained by passively observing the values of $V$.

The idea of defining causality as the difference between the two distributions has recently gained popularity \citep{art:massey:directedInformation,art:raginsky:directed_inf_causality,art:amblard:granger}. It was originally introduced in~\citep{art:massey:directedInformation} as \textit{causal conditioning} of time series and extended to the notion of \textit{directed stochastic kernels} in~\citep{art:tatikonda:dst}. We follow this approach and define the \textit{directed information} as:
\begin{equation}
I(X \rightarrow Y) = D_{KL} (P_{X|Y} || P_{X|\mbox{do}(Y)}|P_{Y}) = \mathbb{E}_{P_{X,Y}} \log \frac{P(X|Y)}{P(X|\mbox{do}(Y))}.
\label{eq:dmi:general}
\end{equation}

This measure has an intuitive interpretation in the setting of interventions in Pearlian graphs: if its value is small, then the two distributions are similar, thus any common changes of $X$ and $Y$ can be identified without intervening on $Y$. Otherwise, performing an intervention on $Y$ has influenced the distribution of $X$, hence the difference must stem from the connections between $X$ and $Y$ in $V$, which were destroyed while intervening on $Y$.

The directed information as defined in Eq.~(\ref{eq:dmi:general}) quantifies the difference between the interventional distribution of $X|\mbox{do}(Y)$ and the observational conditional distribution of $X|Y$. One might also consider a mixed quantity, i.e.\ an interventional distribution with conditioning on a set of passive observations. This leads to the definition of conditional directed information for three disjoint sets $X,Y,Z \subseteq V$~\citep{art:raginsky:directed_inf_causality}:
\begin{equation}
I(X \rightarrow Y|Z) = D_{KL} (P_{X|Y,Z} || P_{X|\mbox{do}(Y),Z}|P_{Y,Z}) = \mathbb{E}_{P_{X,Y,Z}} \log \frac{P(X|Y,Z)}{P(X|\mbox{do}(Y),Z)}.
\label{eq:cdmi:general}
\end{equation}
Analogously to directed information, conditional directed information as defined in~Eq.~(\ref{eq:cdmi:general}) can be interpreted as a measure of the causal relationship between $X$ and $Y$, when paths traversing $Z$ in the underlying DAG are excluded.

\paragraph*{Directed Information Theory for Time Series.}

For the sequel, assume $X^{n} = (X_1, X_2, \dots, X_n)$ and $Y^{n} = (Y_1, Y_2, \dots, Y_n)$ to be random vectors representing time series indexed at the same time points.

For a Pearlian DAG $(V,E)$ where $V$ consists of two time series $X^{n}, Y^{n}$, and $E$ of all possible arrows pointing to the future (i.e.\ all arrows $(X_i,X_j)$, $(Y_i,Y_j)$, $(X_i,Y_j)$, $(Y_i,X_j)$ with $i<j$), the directed information defined in Eq.~(\ref{eq:dmi:general}) takes the following form (see~\citep{art:massey:directedInformation,art:amblard:granger}):
\begin{equation}
I(X^{n} \rightarrow Y^{n}) = \sum_{i=1}^{n} I(X^{i}; Y_{i}|Y^{i-1}),
\label{eq:dmi}
\end{equation}
where $I(X;Y|Z) = D_{KL}(P(X,Y,Z)||P(X|Z)P(Y|Z)P(Z))$.
As in~\citep{art:amblard:granger}, $X^{n-1}$ or $Y^{n-1}$ stand for delayed collections of samples of $X^{n}$ or $Y^{n}$ and their first elements should be understood as wild cards not influencing the conditioning. This means that the symbol $X^{n-1}$ denotes an $n$-dimensional time series $(*, X_1, X_2, \dots , X_{n-1})$, where the first element $*$ does not affect conditioning and where $I(*; Z) = 0$ for any $Z$.

In~\citep{art:amblard:granger} it is also shown that the decomposition of mutual information into directed informations and the \textit{instantaneous coupling} term $I(X^{n} \leftrightarrow Y^{n})$ exist:
\begin{equation}
I(X^{n}; Y^{n}) = I(X^{n-1} \rightarrow Y^{n})+ I(Y^{n-1} \rightarrow X^{n}) + I(X^{n} \leftrightarrow Y^{n}),
\label{eq:dmi:decomposition}
\end{equation}
where $I(X^{n} \leftrightarrow Y^{n}) = \sum_{i=1}^{n} I(X_{i}; Y_{i}|Y^{i-1}, X^{i-1})$.



For jointly Gaussian distributed $(X^{n},Y^{n})$, let the partitioning of the joint covariance matrix $\Sigma_{X^{n},Y^{n}}$ be denoted as follows:
$(X^{n}, Y^{n}) \sim \mathcal{N}  \left(\begin{smallmatrix}
\Sigma_{X^{n}} & \Sigma_{X^{n}Y^{n}}\\
{\Sigma_{Y^{n}X^{n}}} & \Sigma_{Y^{n}}\\
\end{smallmatrix}\right)$.
Recall that for $n$-dimensional Gaussian distributed random variables, entropy, and hence mutual information, have the following form (for the sake of clarity we will neglect the constant term $(2\pi e)^n$ in the sequel):
\begin{equation}
I(X^{n}; Y^{n}) = h(X^{n}) - h(X^{n}|Y^{n}) = \frac{1}{2}\log \left( (2\pi e)^n |\Sigma_{X^{n}}| \right) - \frac{1}{2}\log \left( (2\pi e)^n |\Sigma_{X^{n}|Y^{n}}| \right),
\label{eq:gmi}
\end{equation}
where $\Sigma_{X^{n}|Y^{n}}$ denotes the covariance matrix of $X^{n}|Y^{n}$. Hence, for jointly Gaussian distributed $X^{n}$ and $Y^{n}$ we have:
\begin{equation}
I(X^{n-1} \rightarrow Y^{n}) = \sum_{i=1}^{n-1} ( \log |\Sigma_{X^{i}|Y^{i}}| - \log|\Sigma_{X^{i}|Y^{i+1}}| ).
\label{eq:gdmi}
\end{equation}


There have been attempts at discovering causal relationships in time series. \textit{Transfer entropy}, which is the asymptotic value of the directed information when stationarity of the time series is assumed, is used to model neurobiological data~\citep{art:biol:neuro1,art:biol:neuro2}. 
In~\citep{art:quinn:graphs,art:quinn:trees}, directed information is employed to measure causal relationships between nodes in networks of stochastic processes by approximating the probability distribution of the graphical model with causal trees or causal graphs. Unlike those approaches, we do not treat time series (or random processes) as nodes and do not model relationships between such nodes. Rather than focusing on the interdependencies among multiple time series, we propose to model causal relationships between specific time points of the time series. We also do not have to make any assumptions concerning stationarity.

Another approach to causal compression could be taken where a series of statistical tests of conditional mutual informations is devised in order to compute the directed information (according to Eq.~(\ref{eq:dmi})). In this set-up, however, all subsets of the time series would have to be tested in order to establish the optimal representation. In contrast to this, the proposed method produces a solution path where nodes comprising the compressed representation are added as the sparsity criterion is relaxed.

\section{Causal Compression}
\label{sec:model}

We propose to combine directed information theory as introduced in Section~\ref{sec:relatedwork} with sparse optimisation in order to compute causal compression of time series as motivated in Section~\ref{sec:introduction}. In this way, directed relationships in the data can be modelled. To this end, we note that sparse representation ensures causal compression, i.e.\ that for a given value of directed information, choosing the most sparse time series representation is equivalent to excluding the nodes that do not contribute to the direct causal relationships in the Pearlian graph (see Corollary~\ref{thm:cc}). Let $X^{n}$ and $Y^{n}$ represent the two time series. The time series $T^{n}$, which is the result of the compression, is a sparse representation of $X^{n}$ that preserves a given amount of the directed information between $X^{n}$ and $Y^{n}$. This representation is obtained by introducing the \textbf{causal compression} principle, where the sparse $T^{n}$ maximising the \textbf{directed} information is found. We subsequently (Sections~\ref{sec:model:sub:cs}~and~\ref{sec:model:sub:bipartite}) present two applications of causal compression. Both applications are ways of circumventing the necessity of estimating the whole causal network, as presented in Figure~\ref{pic:motivation}. Finally, we show that solutions to both problems only depend on the copula density of $(X^n,Y^n)$.



The property of sparsity as a building block of causal compression is formalised as Corollary~\ref{thm:cc}. We first show that the equivalent of the chain rule for mutual information also holds for directed information (Lemma~\ref{thm:chain}), and subsequently apply it to time series $X^{n}$, $Y^{n}$.

\begin{lemma}[Chain rule for directed information]
	\label{thm:chain}
	For any disjoint sets $A,B,C$
	\begin{equation}
	I(A,B \rightarrow C) = I(A \rightarrow C) + I(B \rightarrow C|A)
	\end{equation}
\end{lemma}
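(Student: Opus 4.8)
The plan is to mirror the classical proof of the chain rule for mutual information, $I(A,B;C) = I(A;C) + I(B;C|A)$, which hinges on the factorization $P(A,B|C) = P(A|C)\,P(B|A,C)$. I would expand the two directed-information terms via their definitions in Eq.~(\ref{eq:dmi:general}) and Eq.~(\ref{eq:cdmi:general}) and then apply the ordinary chain rule of probability separately to the observational and the interventional distributions.

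First I would write
\begin{equation}
I(A,B \rightarrow C) = \mathbb{E}_{P_{A,B,C}} \log \frac{P(A,B|C)}{P(A,B|\mbox{do}(C))}.
\end{equation}
The numerator factorizes as $P(A,B|C) = P(A|C)\,P(B|A,C)$ by the usual chain rule, while the denominator factorizes as $P(A,B|\mbox{do}(C)) = P(A|\mbox{do}(C))\,P(B|A,\mbox{do}(C))$. Substituting both and using $\log(xy)=\log x + \log y$ splits the integrand in two. By linearity of expectation, the first piece is $\mathbb{E}_{P_{A,B,C}} \log \frac{P(A|C)}{P(A|\mbox{do}(C))}$; since this integrand depends only on $A$ and $C$, I marginalize out $B$ and recognize it as $I(A \rightarrow C)$. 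The second piece is $\mathbb{E}_{P_{A,B,C}} \log \frac{P(B|A,C)}{P(B|A,\mbox{do}(C))}$, which is exactly $I(B \rightarrow C|A)$ in the sense of Eq.~(\ref{eq:cdmi:general}) with the identification $X=B$, $Y=C$, $Z=A$. Summing the two yields the claim, and the disjointness of $A,B,C$ guarantees that every conditional distribution appearing is well-defined.

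The main obstacle is conceptual rather than computational: one must justify that the probability chain rule is legitimate for the \emph{interventional} distribution $P_{A,B|\mbox{do}(C)}$. This is immediate once one observes that $\mbox{do}$-conditioning produces a genuine joint probability measure over the remaining variables, so its factorization into a marginal and a conditional is the ordinary one. Once this is granted, the remainder is a routine expansion entirely parallel to the mutual-information case.
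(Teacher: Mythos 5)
Your proposal is correct and follows essentially the same route as the paper's own proof: expand $I(A,B\rightarrow C)$ via its definition, factorize both the observational and interventional conditionals by the probability chain rule, split the logarithm, and identify the two resulting expectations with $I(A\rightarrow C)$ and $I(B\rightarrow C|A)$. Your added remark justifying that the chain-rule factorization also applies to $P_{A,B|\mbox{do}(C)}$ is a point the paper takes for granted, so it is a welcome (if minor) clarification rather than a deviation.
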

\begin{proof}
The proof is similar to the one of chain rule for conditional Kullback-Leibler divergence and follows from the factorisation of the underlying probability distribution (see Eq.~(\ref{eq:dmi:general})):
\begin{equation*}
\begin{aligned}
&I(A,B \rightarrow C) = \mathbb{E} \log \frac{P_{A,B|C}(A,B|C)}{P_{A,B|\mbox{do}(C)}(A,B|\mbox{do}(C))} \\
&= \mathbb{E} \log \frac{P_{A|C}(A|C)P_{B|C,A}(B|C,A)}
{P_{A|\mbox{do}(C)}(A|\mbox{do}(C))P_{B|\mbox{do}(C),A}(B|\mbox{do}(C),A)}\\
&= \mathbb{E} \log \frac{P_{A|C}(A|C)}
{P_{A|\mbox{do}(C)}(A|\mbox{do}(C))} +\mathbb{E} \log \frac{P_{B|C,A}(B|C,A)}
{P_{B|\mbox{do}(C),A}(B|\mbox{do}(C),A)} = I(A \rightarrow C) + I(B \rightarrow C|A),
\end{aligned}
\end{equation*}
where all expectations are taken with respect to $P_{A,B,C}$.
\end{proof}

\vspace{1mm}
\begin{corollary}[Causal compression is equivalent to sparsity]
\label{thm:cc}
For $A,B \subset X^n$, $A\cap B = \emptyset$
\begin{equation}
I(A,B \rightarrow Y^{n}) = I(A \rightarrow Y^{n}) \quad \Leftrightarrow \quad I(B \rightarrow Y^{n}|A) = 0
\end{equation}
\end{corollary}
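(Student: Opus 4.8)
The plan is to obtain the equivalence directly from the chain rule proved in Lemma~\ref{thm:chain}, with no additional machinery. Instantiating that lemma with $C = Y^n$ gives the identity
\begin{equation*}
I(A,B \rightarrow Y^{n}) = I(A \rightarrow Y^{n}) + I(B \rightarrow Y^{n}|A),
\end{equation*}
which already encodes the statement: the left-hand side equals $I(A \rightarrow Y^n)$ exactly when the trailing term $I(B \rightarrow Y^{n}|A)$ is zero. So the whole proof reduces to invoking the lemma and rearranging.

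Before applying Lemma~\ref{thm:chain}, I would verify its hypothesis that the three arguments be pairwise disjoint. The assumption $A \cap B = \emptyset$ is given. Since $A, B \subset X^n$ and the two time series $X^n, Y^n$ are the separate constituents of $V$ and therefore share no random variables, we also have $A \cap Y^n = \emptyset$ and $B \cap Y^n = \emptyset$. Hence $A$, $B$, and $C = Y^n$ are pairwise disjoint and the lemma is applicable. Subtracting $I(A \rightarrow Y^n)$ from both sides of the displayed identity yields
\begin{equation*}
I(A,B \rightarrow Y^{n}) - I(A \rightarrow Y^{n}) = I(B \rightarrow Y^{n}|A),
\end{equation*}
from which both implications of the equivalence read off immediately: the equality of the two directed informations holds if and only if the right-hand side vanishes.

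There is no genuine obstacle in this argument, as the statement is a one-line corollary of the chain rule; the only point deserving a moment's attention is confirming the disjointness condition required to invoke Lemma~\ref{thm:chain}, which follows from $A, B \subset X^n$ together with the disjointness of the two time series. Once that is noted, the conclusion is purely algebraic.
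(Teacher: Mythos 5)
Your proof is correct and matches the paper's intended argument: the paper presents this as an immediate corollary of Lemma~\ref{thm:chain} applied with $C = Y^n$, and the equivalence follows by rearranging the resulting identity exactly as you do. Your extra check of the pairwise disjointness hypothesis is a reasonable addition but does not change the route.
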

Corollary~\ref{thm:cc} states that for the same value of directed information between a subset $S$ of $X^n$, and $Y^n$, adding more variables to the subset $S$ means adding variables which do not exhibit causal (in the sense of Pearlian graphs) relations with $Y^{n}$ other than via the original $S$. Therefore, the optimal causal compression at a given level of directed information is ensured by the sparsity of the compressed representation of $X^n$, i.e.\ by selecting as few time points as possible. Note that Corollary~\ref{thm:cc} can be interpreted in the spirit of Granger causality: the variables in $X^n$ that are not selected by the sparsity requirement do not Granger-cause the effect $Y^n$.

Based on Corollary~\ref{thm:cc}, the idea behind the causal compression principle is to find the compressed representation (here, $T^{n}$) of a set of random variables by maximising directed information involving $T^{n}$ and enforcing its sparsity. The causal compression principle can therefore be implemented in any method that:
\begin{enumerate}
	\item admits the use of directed information or cognate information theoretic tools,
	\item allows for incorporation of sparsity.
\end{enumerate}
We now proceed to describe two ways of applying the causal compression as depicted in Figure~\ref{pic:motivation} by assuming the time series $X^{n}$, $Y^{n}$ to be jointly Gaussian distributed and devising an optimisation problem\footnote{Note that other approaches can be proposed for implementing the causal compression principle, such as adjusting the sparse Gaussian information bottleneck~\citep{art:rey:sgib} for preserving $I(T^{n-1} \rightarrow Y^{n})$.} for finding the optimal sparse representation of $X^{n}$. We then relax the Gaussian assumption in Section~\ref{sec:model:sub:copula}. We begin with specifying and solving the optimisation problem which implements the causal compression principle in Section~\ref{sec:model:sub:algo}.

\subsection{Defining and solving the optimisation problem}
\label{sec:model:sub:algo}

According to the conditions specified in Section~\ref{sec:model}, we define $T^{n}$, the compressed version of $X^{n}$, to be a linear noisy projection of $X^{n}$, i.e.\ $T^{n} = A X^{n}+\xi$ with $\xi \sim \mathcal{N}(0,I)$. In order to impose sparsity of the projection, we assume $A$ to be diagonal. Thus, non-zero entries of $A$ define which elements of $X^{n}$ are chosen to the sparse representation. Note that if the projection were not noisy, the optimisation problem would reduce to binary feature selection akin to the statistical tests approach considered in Section~\ref{sec:relatedwork} (the directed information between $T^{n}$ and $Y^{n}$ would then only depend on the rank of $A$). We incorporate directed information by maximising its value according to the decomposition given by Eq.~(\ref{eq:dmi:decomposition}). The assumption that $X^{n}$ is Gaussian distributed means that $T^{n}\sim \mathcal{N}(0,A \Sigma_{X^{n}} A^\top + I)$. Thus, the optimisation problem for finding the causal compression of $X^{n}$ described in Section~\ref{sec:model} can be stated in a LASSO fashion as follows:
\begin{equation}
\underset{D=A^\top A}{\mbox{min}}  -I(AX^{n-1} \rightarrow Y^{n}) \quad \mbox{s.\ t.}\quad ||D||_1 < \kappa
\label{eq:opt:general}
\end{equation}
where $D=A^\top A$ is a diagonal matrix and $||D||_1 = \sum_{i=1}^{n}|d_i|$ its $L_1$ norm. Plugging Eq.~(\ref{eq:gdmi}) in to Eq.~(\ref{eq:opt:general}) and noting that $|AMA^\top+I|=|MA^\top A+I|$, yields:
\begin{equation}
\begin{aligned}
\underset{D}{\mbox{min}}  - \sum_{i=1}^{n-1}  &( \log |\Sigma_{X^{i}|Y^{i}} D_{i} + I| -\log|\Sigma_{X^{i}|Y^{i+1}} D_{i} + I| ) \\
&\mbox{s.\ t.}\quad  \sum_{i=1}^{n}|d_i| < \kappa \mbox{ \quad and \quad} \forall_{i}\  d_i \geq 0
\label{eq:opt:general:gauss}
\end{aligned}
\end{equation}
where $D_i = \mbox{diag}({d_1}, \dots , {d_i})$.

Greedy optimization methods such as \textit{stagewise forward} \citep{tibshirani2014general} can now be applied to approximate the optimal solution to~(\ref{eq:opt:general}). The stagewise forward procedure will recover the whole solution path. For handling the non-negativity constraints on the elements of $D$, we use gradient projection in the spirit of the \textit{monotone stagewise forward} method~\citep{hastie2007forward}. This procedure is formalised as Algorithm~\ref{alg:1}.

The gradient $g$ computed in line~\ref{alg:1:gradient} of Algorithm~\ref{alg:1} is a sum of terms of the following form: $[(\Sigma_{X^{i}|Y^{i}}D_i+I)^{-1}\cdot D_i]\mathbbm{1}$ for every $\Sigma_{X^{i}|Y^{i}}$ and $\Sigma_{X^{i}|Y^{i+1}}$ between $i=1$ and $n-1$. By applying the Sherman–Morrison formula for rank-$1$ update $2(n-1)$ times, the gradient can be computed in $O(n^3)$ time in every iteration, assuming that the covariance matrix $\Sigma_{X^{n},Y^{n}}$ has been precomputed. The while loop is executed at most $\frac{\kappa}{\epsilon}$ times, where $\kappa$ is the sparsity parameter and $\epsilon$ --- the learning rate. Thus, the running time of Algorithm~\ref{alg:1} is $O(\frac{\kappa}{\epsilon} n^3)$



\begin{algorithm}[t]
	\KwIn{Sample covariance matrix $\Sigma_{X^{n},Y^{n}}$, learning rate $\epsilon$}
	\KwOut{$D = \mbox{diag}(d_1, \dots ,d_n)$ }
	Initialise $D = \mbox{diag}(0, \dots , 0)$\\
	\While{$\sum_{i=1}^{n} |d_i| < \kappa$} {
		$ g = \nabla \sum_{i=1}^{n-1} ( \log |\Sigma_{X^{i}|Y^{i}} {D_i} + I| -\log|\Sigma_{X^{i}|Y^{i+1}} D_i + I| )$\label{alg:1:gradient}\\
		\If{$\max_{i} g_i \leq 0$}{break}
		$j=\arg\max_{k} g_k$\\
		$d_j = d_j + \epsilon$\\
	}
	\caption{Optimisation algorithm for Eq.~(\ref{eq:opt:general:gauss})}
	\label{alg:1}
\end{algorithm}

\subsection{Causal Segmentation}
\label{sec:model:sub:cs}

In this section we show how the causal compression principle can be used to classify points in a time series into three classes with respect to another time series: points carrying incoming directed information, outgoing directed  information and points instantaneously coupled with corresponding points from the other time series.

The above optimisation problem finds a set $X^{out}$, which is a compressed representation of $X^{n}$. This compressed representation (i.e.\ non-zero values in the vector $(d_1, \dots , d_n)$ in Eq.~(\ref{eq:opt:general})) defines the segment of $X^{n}$ containing all the nodes in $X^{n}$ that carry directed information from $X_j$ to $Y_k$ with $j<k$ (\textcolor{orange}{orange} nodes in Figure~\ref{pic:motivation}). Thus all nodes in $X^{n}$ with possible outgoing causal flow to future nodes in $Y^{n}$ are selected.

As defined in Eq.~(\ref{eq:dmi:decomposition}), mutual information between the compressed representation of $X^{n}$ (i.e.\ $T^{n}$) and $Y^{n}$ decomposes into three elements: $I(Y^{n};T^{n}) = \textcolor{green}{I(Y^{n-1} \rightarrow T^{n})} + \textcolor{orange}{I(T^{n-1} \rightarrow Y^{n})} + \textcolor{blue}{I(T^{n} \leftrightarrow Y^{n})}$. This means that by substituting the directed information $I(T^{n-1} \rightarrow Y^{n})$ with $I(Y^{n-1} \rightarrow T^{n})$  in Eq.~(\ref{eq:opt:general}) and solving the resulting optimisation problem, the compressed representation $T^{n}$ of $X^{n}$ is forced to contain all nodes from $X^{n}$ carrying the information flow in the other direction, i.e.\ from $Y_j$ to $X_k$ with $j<k$. In this way the subset $X^{in}$ of $X^{n}$ with all the nodes in $X^{n}$ with possible incoming causal flow from past nodes in $Y^{n}$ is selected (see \textcolor{green}{green} nodes in Figure~\ref{pic:motivation}). Analogously, if $I(T^{n-1} \rightarrow Y^{n})$ is replaced with $I(T^{n} \leftrightarrow Y^{n})$ in Eq.~(\ref{eq:opt:general}), one obtains the set $X^{eq}$, i.e.\ nodes in $X^{n}$ which are instantaneously coupled with their counterparts in $Y^{n}$ (\textcolor{blue}{blue} nodes in Figure~\ref{pic:motivation}).


The above procedure can be summarised as follows: in order to fully describe the causal relationships involving $X^{n}$, find the segments in $X^{n}$ containing nodes with outgoing, incoming or instantaneous causal flow. To this end, compress $X^{n}$ to $T^{n}$ three times, each time modifying the condition in~(\ref{eq:opt:general}) accordingly:
\begin{itemize}
\item optimise $\textcolor{orange}{I(T^{n-1} \rightarrow Y^{n})}$ to select $X^{out}$: the segment of $X^{n}$ with outgoing causal flow to the future of $Y^{n}$,
\item optimise $\textcolor{green}{I(Y^{n-1} \rightarrow T^{n})}$ to select $X^{in}$: the segment of $X^{n}$ with incoming causal flow from the past $Y^{n}$,
\item optimise $\textcolor{blue}{I(T^{n} \leftrightarrow Y^{n})}$ to select $X^{eq}$: the segment of $X^{n}$ with which is instantaneously coupled to $Y^{n}$.
\end{itemize}

\subsection{Causal bipartite graph retrieval}
\label{sec:model:sub:bipartite}

In this section we show how to apply the causal compression principle to estimate the causal bipartite graph between two time series, without estimating the whole directed network. This corresponds to the left hand side short-cut in Figure~\ref{pic:motivation}. Note that it is a different problem than the causal segmentation described in Section~\ref{sec:model:sub:cs}, since it is not sufficient to estimate which points are in the $X^{out}$ and $X^{in}$ sets. It also has to be established, to which points in the other time series the arrows lead. Note that it is straightforward to infer the causal segmentation given the causal bipartite graph, but not the other way around (see Figure~\ref{pic:problem_setting}).

In order to establish the arrows, one can make use of the decomposition of the directed information between $X^n$ and $Y^n$ (Eq.~(\ref{eq:dmi})). It consists of a sum of terms of the form $ I(X^{i}; Y_{i}|Y^{i-1})$ for all $i$, where each such term measures the information flow from the past of $X^i$ to the current $Y_i$. Therefore, by exchanging the expression for $I(T^{n-1} \rightarrow Y^{n})$ with $ I(T^{i}; Y_{i}|Y^{i-1})$ in Eq.~(\ref{eq:opt:general}), one obtains a sparse representation $X^{out}_{i}$ of all time points in $(X_1, \dots, X_{i-1})$ that make up the causal flow to $Y_i$, and thus all arrows that lead to $Y_i$. If this procedure is now repeated for all $i$, all arrows from $X^n$ to $Y^n$ are established. The arrows in the other direction, i.e.\ from $Y^n$ to $X^n$, are established by simply exchanging $X^n$ and $Y^n$ and finding the sparse compression of $Y^n$. The undirected edges representing pairs of instantaneously coupled points can be found as described in Section~\ref{sec:model:sub:cs}, since they always connect time points with the same index $i$.

As in the case of causal segmentation, the above procedure consists of three steps where the causal compression is performed with different optimisation objectives (condition in Eq.~(\ref{eq:dmi})):
\begin{itemize}
\item for each $i \in 2,\dots n$, optimise $I(T^{i}; Y_{i}|Y^{i-1})$ to select $X_{i}^{out}$: the segment of $X^{i-1}$ with outgoing causal flow to $Y_{i}$; add arrows from $X_{i}^{out}$ to $Y_{i}$ to the model,
\item for each $i \in 2,\dots n$, optimise $I(T^{i}; X_{i}|X^{i-1})$ to select $Y_{i}^{out}$: the segment of $Y^{i-1}$ with outgoing causal flow to $X_{i}$; add arrows from $Y_{i}^{out}$ to $X_{i}$ to the model
\item optimise $I(T^{n} \leftrightarrow Y^{n})$ and $I(T^{n} \leftrightarrow X^{n})$ to select $X^{eq}$ and $Y^{eq}$; add edges between all pairs of $X_i$, $Y_i$ for which $X_i \in X^{eq}$ and $Y_i \in Y^{eq}$.
\end{itemize}

The optimisation problem is solved as described in Section~\ref{sec:model:sub:cs}, the only difference being the substitution of the full directed information $I(T^{n-1} \rightarrow Y^{n})$ with its element corresponding to the time point $i$: $ I(T^{i}; Y_{i}|Y^{i-1})$

\subsection{Copula extension}
\label{sec:model:sub:copula}

Directed information, as well as conditional mutual information, can be decomposed into a sum of \textit{multiinformations}~\citep{phd:multiinformation}: $I(X^{n}  \rightarrow Y^{n}) =  \sum_{i=1}^{n} \left[ M(X^{i}, Y^{i})-M(X^{i},Y^{i-1})\right] - M(Y^{n})$, 
where $M(X^{n}) =  D_{KL}(P(X_1, \dots , X_n)||P(X_1)\dots P(X_n))$.
In~\citep{art:mi:copula} it was shown that for a continuous random vector $Z^{m} = (Z_1, \dots , Z_m)$, its multiinformation is equal to the negative entropy of its copula density, i.e.\ $M(Z^{m}) = -H(c_{Z^{m}})$, where $H(c_{Z^{m}}) = \int_{[0,1]^m} \log c_{Z^{m}}(u) c_{Z^{m}}(u) \mbox{d} u$ and $c_{Z^{m}}$ is the copula density of the vector $Z^{m}$.

\begin{thm}[Copula formulation of causal discovery.]
\label{thm:1}
For continuous $(X^{n}, Y^{n})$, any causal relationship described with directed information only depends on the entropy of copula density of $(X^{n}, Y^{n})$.
\footnote{Note that this result reaches beyond the time series setting as long as one expresses directed information as a sum of conditional mutual informations analogously to Eq.~(\ref{eq:dmi}) with conditioning on parent sets.}
\end{thm}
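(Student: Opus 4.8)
The plan is to rewrite every directed-information quantity as a signed sum of multiinformations, convert each multiinformation into a copula entropy via the identity $M(Z^{m}) = -H(c_{Z^{m}})$ recalled just above the statement, and then observe that every copula appearing is a margin of the single copula $c_{(X^{n},Y^{n})}$, so that the whole expression is a functional of that copula alone and is independent of the marginal laws of the coordinates.

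First I would take the decomposition $I(X^{n} \rightarrow Y^{n}) = \sum_{i=1}^{n} \left[ M(X^{i},Y^{i}) - M(X^{i},Y^{i-1}) \right] - M(Y^{n})$ and substitute $M = -H(c)$ termwise, obtaining
\begin{equation*}
I(X^{n} \rightarrow Y^{n}) = \sum_{i=1}^{n} \bigl[ H(c_{(X^{i},Y^{i-1})}) - H(c_{(X^{i},Y^{i})}) \bigr] + H(c_{Y^{n}}).
\end{equation*}
Thus the directed information is a fixed linear combination of copula entropies of subvectors of $(X^{n},Y^{n})$, where the wild-card first element of $Y^{i-1}$ is handled by the convention $I(*;Z)=0$ (equivalently, $Y^{0}$ contributes a trivial copula entropy of zero).

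The key step is then to argue that each such subvector copula is itself determined by $c_{(X^{n},Y^{n})}$. Here I would invoke the standard fact from copula theory that the copula of any subvector of a continuous random vector is obtained as a margin of the copula of the full vector: by Sklar's theorem together with the invariance of the copula under strictly increasing coordinatewise transformations, marginalising $c_{(X^{n},Y^{n})}$ over the coordinates not belonging to the subvector yields exactly the copula of that subvector. Applying this to $(X^{i},Y^{i})$, $(X^{i},Y^{i-1})$ and $Y^{n}$ shows that each copula entropy term, and hence $I(X^{n} \rightarrow Y^{n})$ itself, depends only on $c_{(X^{n},Y^{n})}$ and not on the marginal distributions of the individual components. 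I expect this subcopula-as-margin fact to be the main obstacle, since it must be applied with care to ensure that the margins of the copula on $[0,1]^{\dim}$ coincide precisely with the copulas of the corresponding marginal distributions.

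Finally I would extend the argument to the conditional directed information of Eq.~(\ref{eq:cdmi:general}) and to the instantaneous coupling term. Since $I(X^{n} \leftrightarrow Y^{n}) = \sum_{i=1}^{n} I(X_{i};Y_{i}|Y^{i-1},X^{i-1})$ and every conditional mutual information is again a signed sum of multiinformations, the same substitution and marginalisation produce a linear combination of copula entropies of subvectors of $(X^{n},Y^{n})$, so the conclusion carries over verbatim. The general case flagged in the footnote, where directed information is expressed as a sum of conditional mutual informations with conditioning on parent sets, is handled identically, completing the argument.
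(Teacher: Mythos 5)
Your proposal is correct and follows essentially the same route as the paper, which proves the theorem in one line by expressing directed information via the multiinformation decomposition and then invoking the identity $M(Z^{m}) = -H(c_{Z^{m}})$. The only addition is your explicit verification that each subvector's copula is a margin of $c_{(X^{n},Y^{n})}$, a detail the paper leaves implicit but which is a standard consequence of Sklar's theorem for continuous vectors.
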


This result can be shown by expressing directed information in terms of multiinformations and using their equivalence to the copula entropy as described above. 

From Theorem~\ref{thm:1} it follows that the causal compression principle, as described in this section, only depends on the copula density of $(X^{n}, Y^{n})$. This means that for inference we only have to estimate the copula part of the distribution. In particular, for Gaussian distributed data only the correlation matrices have to be identified. The Gaussian assumption can therefore be relaxed to the class of distributions with a Gaussian copula, sometimes called meta-Gaussian distributions.

In practice, to fit a semi-parametric copula model (with non-parametric marginals and a parametric Gaussian copula), one has to estimate correlation matrices between the $2n$ dimensions of the model. They depend on the normal scores $\Phi^{-1}(r_{ik})$ where $r_{ik}$ is the rank of the $i$-th observation of dimension $k$. The normal scores rank correlation coefficient between dimensions $k$ and $j$ can then defined as $\frac{\sum_{i=1}^{d}\Phi^{-1}(\frac{r_{ik}}{d})\Phi^{-1}(\frac{r_{ij}}{d})}{\sum_{i=1}^{d}(\Phi^{-1}(\frac{i}{d})) ^2}$, which is an efficient estimator studied in~\citep{art:copula:estimator}. The correlation matrix made up of such coefficients for all dimensions  is positive definite, and is in practice fed in to Algorithm~\ref{alg:1} in place of the covariance matrix $\Sigma_{X^n,Y^n}$ to perform inference on meta-Gaussian distributions.

\section{Experiments}
\label{sec:experiments}

\subsection{Synthetic data}
\label{sec:experiments:sub:synthetic}
We first test our approach on artificial data. To this end we draw $500$ samples from a multivariate Gaussian model for $(X^n, Y^n)$. For $Z := (X_1,\dots,X_n,Y_1,\dots,Y_n)^t$, we assume the model to be  $Z = BZ + \xi$, with $B$ being a lower triangular matrix and $\xi \sim N(0,\sigma^2I)$, i.e.\ $Z = (I-B)^{-1}\xi \sim N(0, \Sigma), \quad \Sigma^{-1} = \sigma^{-2}(I-B)(I-B)^t$. We define $B$ so that this corresponds to a $6$-th order Markov model for $X^n$ and $Y^n$, 3 additional links $X\to Y$  and 4 links $Y\to X$ as well as two instantaneous coupling terms $X\leftrightarrow Y$, as depicted in Figure~\ref{pic:solpath}.

Based on the above model, we first perform causal segmentation as described in Section~\ref{sec:model:sub:cs}. We compute sets $\textcolor{orange}{X^{out}}$, $\textcolor{green}{X^{in}}$ and $\textcolor{blue}{X^{eq}}$ by varying $\kappa$ in Eq.~(\ref{eq:opt:general}). Then we compute the full solution path and choose subsets based on \textit{information score} $\frac{d\, I(T^{n-1} \to Y^{n})}{d\, \kappa}$ (slope of the red curve in Figure~\ref{pic:solpath}), evaluated for every variable $X_i$ at the point where this variable becomes non-zero (note that $T_i = D_i + \xi_i$). A threshold for the information score is obtained from repeated experiments with uncorrelated $X^{n},Y^{n}$.

Subsequently, we perform the recovery of the causal bipartite graph between $X^n$ and $Y^n$. We compute sets $X_{i}^{out}$, $Y_{i}^{out}$, $X^{eq}$ and $Y^{eq}$ according to the procedure described in Section~\ref{sec:model:sub:bipartite} and add the corresponding edges and arrows to the bipartite graph. We are able to both perform the causal segmentation and recover the bipartite graph correctly, as presented in Figure~\ref{pic:solpath}.

\begin{figure*}[htbp]
	\centering
	\frame{\includegraphics[width = 0.42\textwidth]{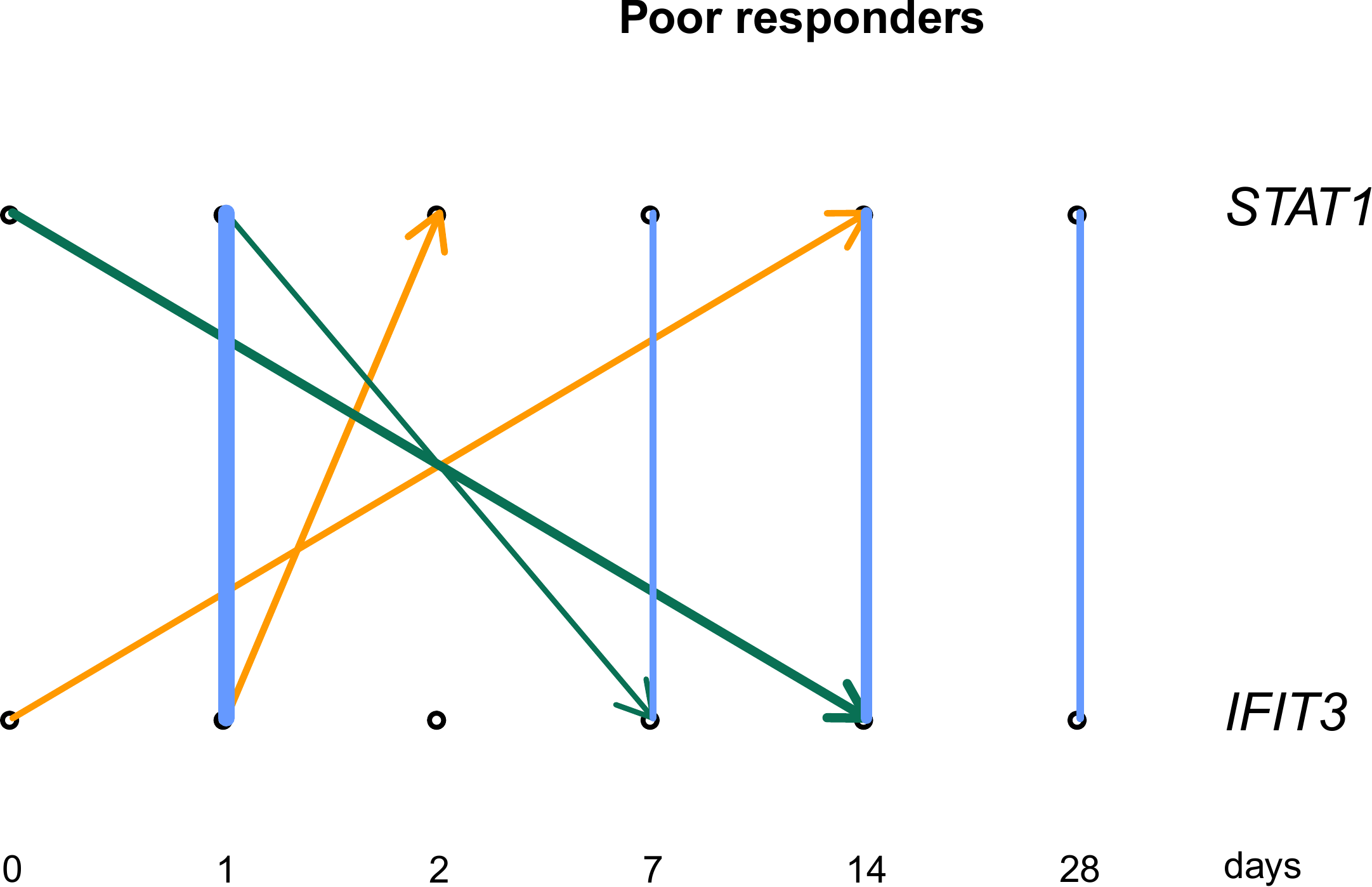}} 
	\quad \quad 
	\frame{\includegraphics[width = 0.42\textwidth]{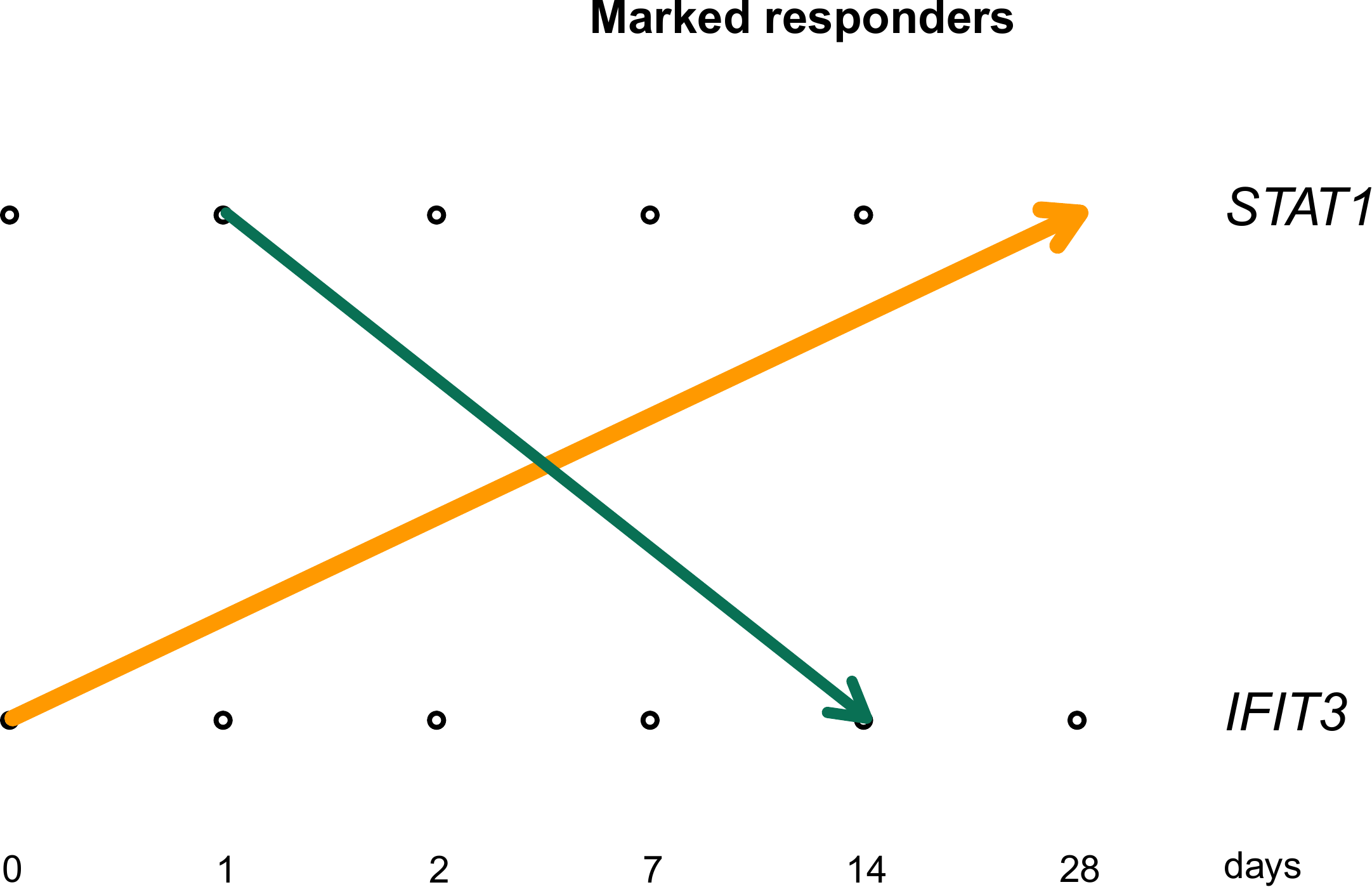} }
	\caption{Time-resolved gene expression data from HCV patients: reconstructed causal graphs for the groups of poor and marked responders. The drawing style and corresponding semantic interpretation is the same as in Figure \ref{pic:solpath}.}
	\label{pic:STAT1}
\end{figure*}

\begin{figure*}[htbp]
  \centering
\vspace{-3mm}
   \fbox{\includegraphics[width = 0.99\textwidth]{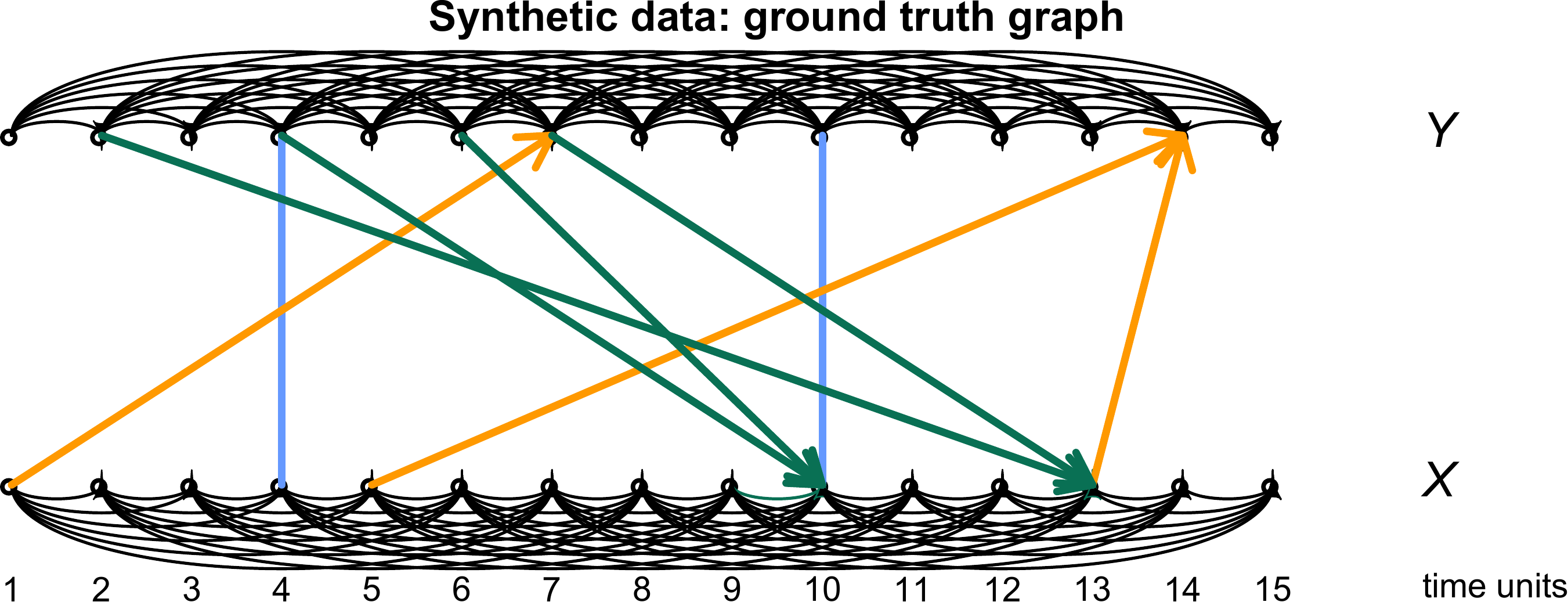}}\\
  \includegraphics[width = 0.475\textwidth]{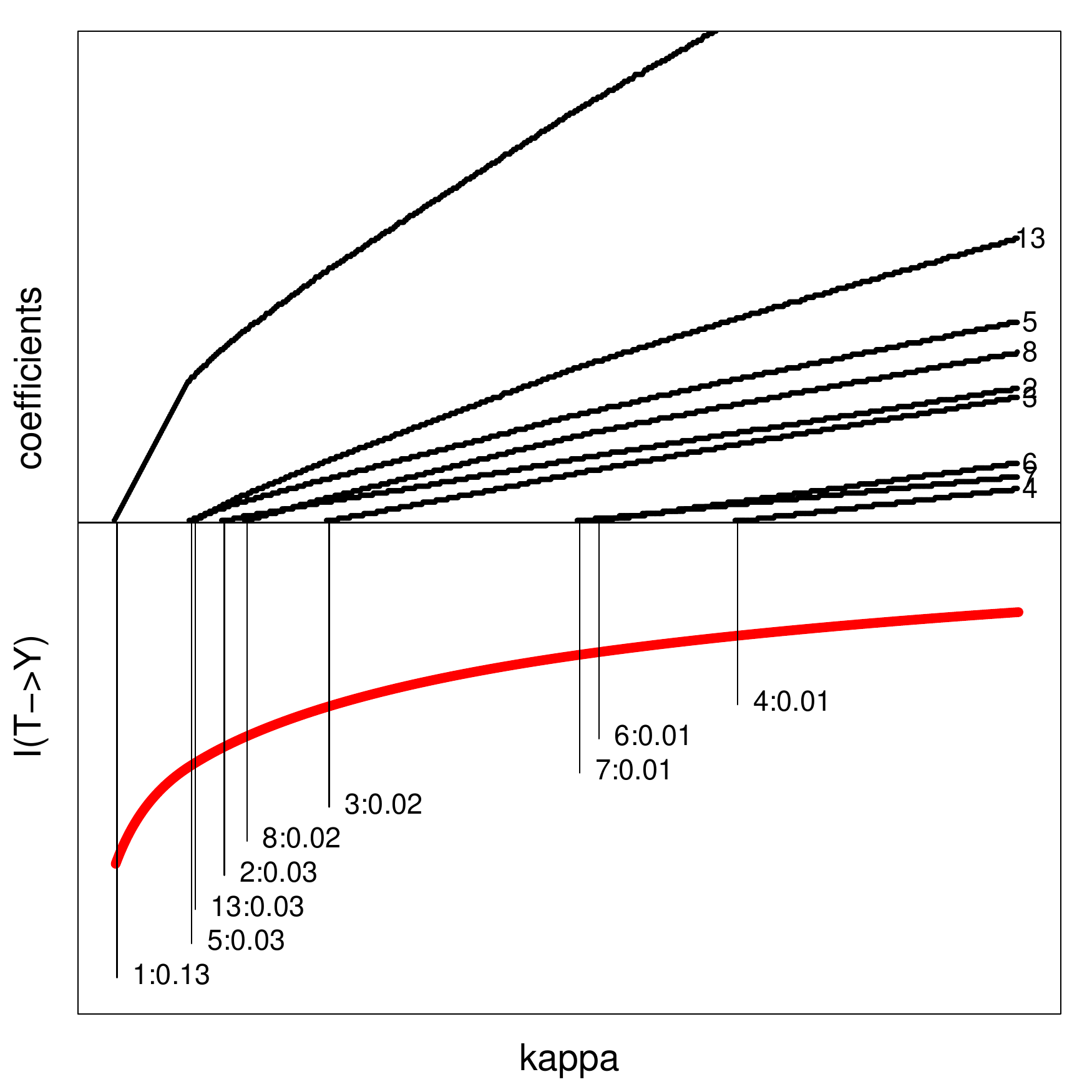} \includegraphics[width = 0.425\textwidth]{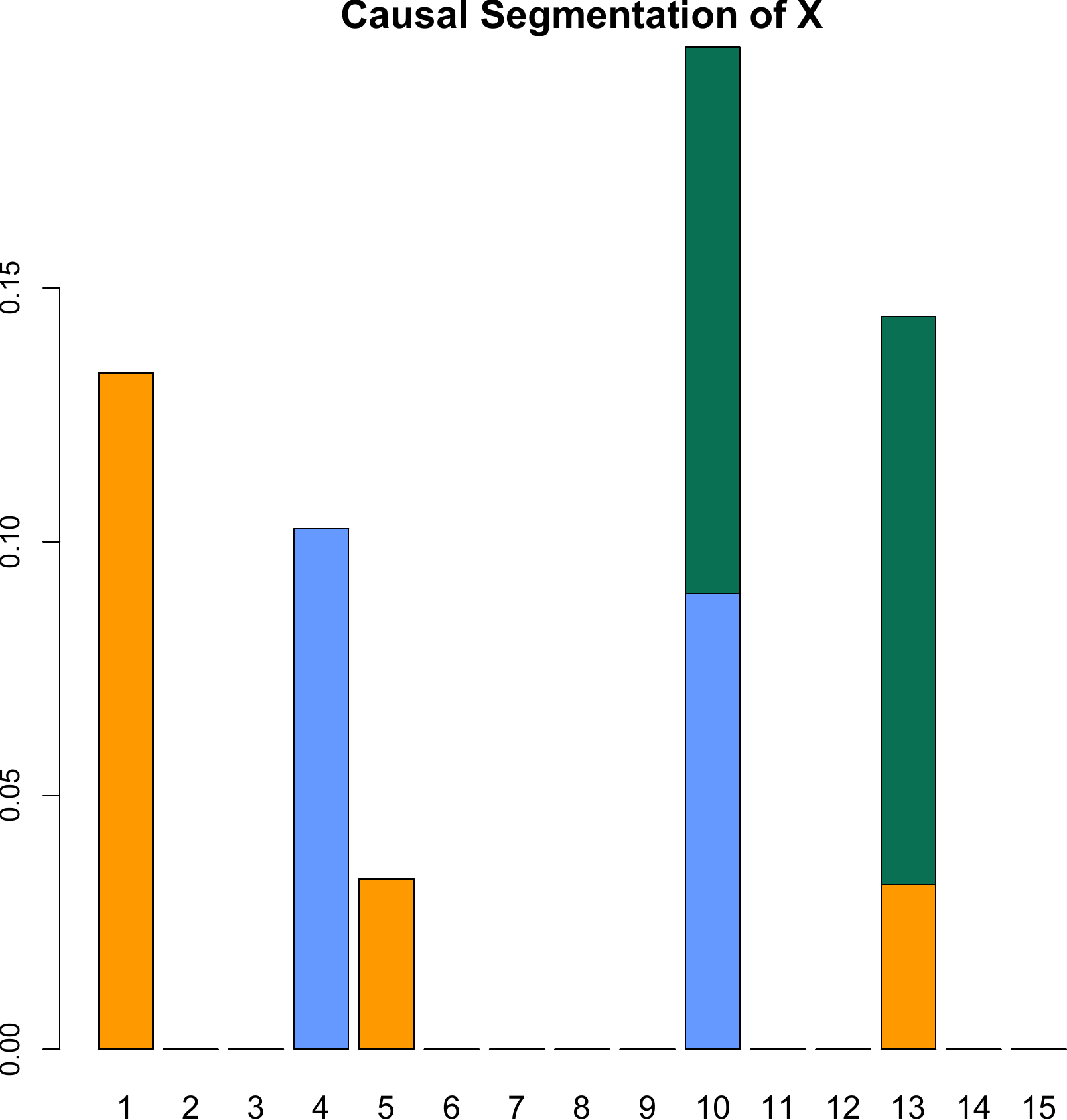}\\
 \fbox{\includegraphics[width = 0.99\textwidth]{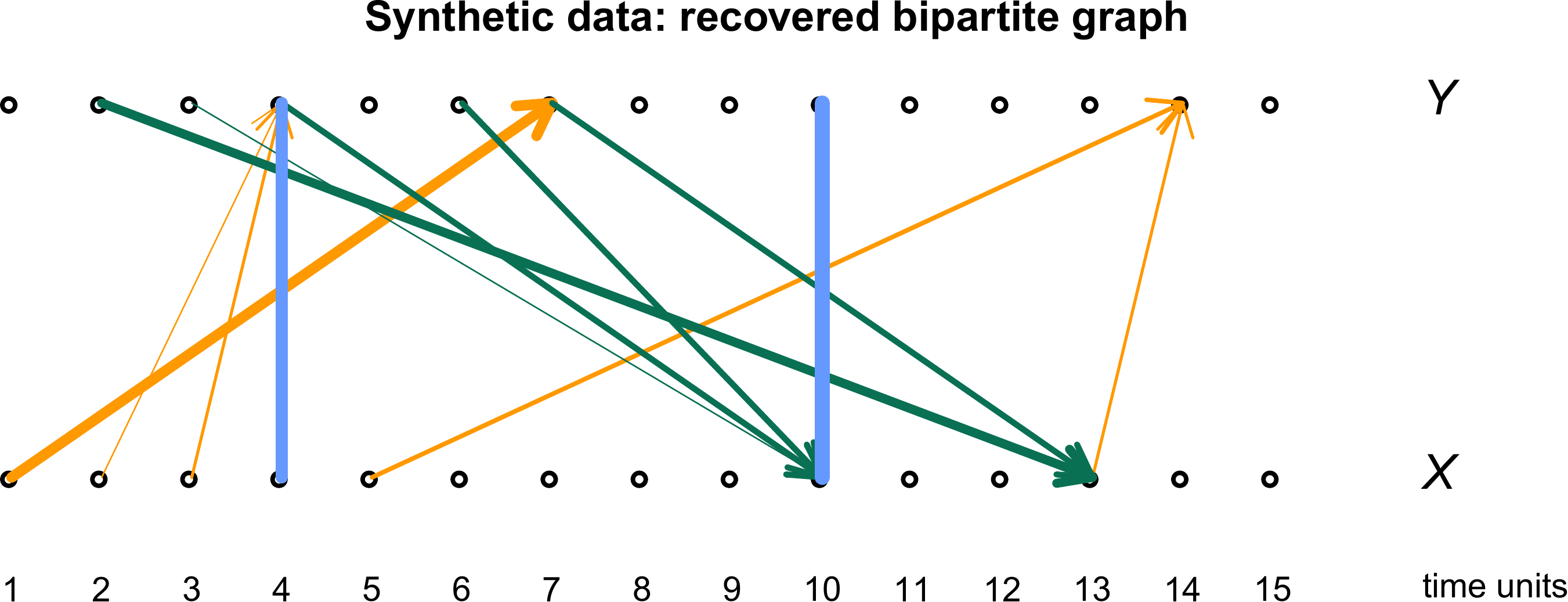}}
  \caption{\textbf{Top:} Ground truth graph. \textbf{Middle:} solution path, information curve  and ``information score'' defined as the derivative at entry points of new variables (left) and causal segmentation of time series $X$ (right). Height of the coloured bars represents the information score of the corresponding coefficient, colour refers to direction: red = ``outgoing'', green = ``incoming'', blue = instantaneous (i.e.~undirected). \textbf{Bottom:} recovered bipartite graph.}
\label{pic:solpath}
\end{figure*}

\subsection{Time resolved gene expression data}
\label{sec:experiments:sub:real}

As a demo application of the causal compression principle we have chosen a human hepatitis C virus (HCV) dataset  that contains time-resolved gene expression profiles from patients with chronic HCV genotype 1 infection \citep{taylor2007changes}. Gene expression was profiled with a HG-U133A GeneChip at six time points after initiation of treatment with pegylated alpha interferon and ribavirin (at days 0,1,2,7,14,28, with ``0'' indicating pre-treatment conditions). Based on the observed decrease in HCV RNA levels at day 28, patients were labelled to have a  ``marked''  (27 patients) or ``poor'' response to treatment (25 patients). The data is available from NCBI/GEO under accession no.~GSE7123. For our analysis, we focused on two different genes that are known to have a crucial interacting role in interferon signalling, namely the transcription factor \emph{STAT1} and the interferon-induced antiviral gene \emph{IFIT3}. Note that these transcriptional interactions between genes take place at timescales of the order of hours, which would appear as \textit{instantaneous couplings} in our dataset with its timescales form days to weeks. We used the same experimental setup as described for synthetic data in Section~\ref{sec:experiments:sub:synthetic}, i.e.~the causal compression principle for the reconstruction of causal bipartite graphs. The analysis was carried out separately for the ``marked'' and the ``poor'' responders, see Figure \ref{pic:STAT1}. There are pronounced differences between the two groups: generally, in the marked responders, the interferon therapy destroys most of the normally tight interactions between \emph{STAT1}   and \emph{IFIT3} (complete loss of instantaneous coupling terms), whereas these interactions seem to be largely unaffected in the poor responders. Secondly, both groups show causal pre-treatment/post-treatment interactions, but for the marked responders,  the influence of initial \emph{IFIT3} on late \emph{STAT1} values is much more prominent. This latter observation might be particularly interesting, since pre-treatment expression levels of interferon-induced genes are known to be strong predictors of treatment response \citep{dill2011interferon}, but the underlying mechanism of this effect is largely unknown.















\section{Conclusion}
\label{sec:conclusion}

We have proposed a new way of discovering causal relationships in temporal data by employing \textbf{causal compression}. We have introduced the chain rule for directed information and proved that causal compression is equivalent to sparsity. Conditions under which the principle of causal compression can be employed have been identified.


We have demonstrated how to tune the compression procedure for the case of time series distributed with a Gaussian copula. A method of causal time series segmentation with respect to incoming and outgoing causal flow as well as instantaneous coupling, was proposed in Section~\ref{sec:model:sub:cs}. Recovery of causal interactions between two time series in the form of a directed bipartite graph was described in Section~\ref{sec:model:sub:bipartite}. Note that the causal compression principle remains valid for arbitrary Pearlian graphs other than time series and non-Gaussian data as long as the directed information can be computed.



The third contribution of the paper is the proposition that directed information can be expressed as a function of the entropy of the copula density only, as stated in Theorem~\ref{thm:1}. In the case of Gaussian distribution this means that one only has to estimate the correlation matrices from the data. This means that the modelling of causality in the framework of Pearlian graphs only requires knowing the copula structure of the modelled data and is independent of their marginals.

\small{
\bibliographystyle{plainnat}
    \bibliography{dIB}
}


\end{document}